\documentclass[runningheads]{llncs}
\usepackage[ruled,vlined]{algorithm2e}
\usepackage{todonotes}
\usepackage{amsmath}
\usepackage{float}
\usepackage{makecell}
\usepackage{paralist}
\usepackage{comment}

\usepackage[T1]{fontenc}
\usepackage{graphicx}
\usepackage{hyperref}
\usepackage{paralist}
\usepackage{color}
\usepackage{amssymb}
\usepackage{subcaption}

\newcommand{\mypar}[1]{\smallskip\noindent\textbf{#1.}}

\begin{document}
\title{Discovery of Decision Synchronization Patterns from Event Logs}
%
%
\author{Tijmen Kuijpers\inst{1} \and
Karolin Winter\inst{1} \and
Remco Dijkman\inst{1}}
\authorrunning{Tijmen Kuijpers et al.}
%
\institute{Eindhoven University of Technology, Department of Industrial Engineering and Innovation Sciences, Eindhoven, The Netherlands 
\email{\{t.p.kuijpers|k.m.winter|r.m.dijkman\}@tue.nl}}

\maketitle              
\begin{abstract}

Synchronizing decisions between running cases in business processes facilitates fair and efficient use of resources, helps prioritize the most valuable cases, and prevents unnecessary waiting. Consequently, decision synchronization patterns are regularly built into processes, in the form of mechanisms that temporarily delay one case to favor another. These decision mechanisms therefore consider properties of multiple cases at once, rather than just the properties of a single case; an aspect that is rarely addressed by current process discovery techniques. To address this gap, this paper proposes an approach for discovering decision synchronization patterns inspired by supply chain processes. These decision synchronization patterns take the form of specific process constructs combined with a constraint that determines which particular case to execute. We describe, formalize and demonstrate how the constraint for four such patterns can be discovered. We evaluate our approach in two artificial scenarios. First, with four separate process models each containing a single decision synchronization pattern, i.e., we demonstrate that our approach can discover every type of pattern when only this one type is present. Second, we consider a process model containing all four decision synchronization patterns to show generalizability of the approach to more complex problems. For both scenarios, we could reliably retrieve the expected patterns.
\end{abstract}
%
%
%


\section{Introduction}
Synchronizing decisions across multiple running cases is crucial in business processes to ensure efficient use of resources or help prioritize the most valuable cases. For example, in supply processes, one case often needs to wait for another~\cite{Garca-Alcaraz2021} to ensure that a case does not incur a penalty for late delivery or perishable ingredients go bad. As a simple example, consider the scenario from Fig. \ref{fig: Priority process model}. In this scenario, jobs arrive every 5 time units. Upon arrival, they need to be pre-processed, which also takes 5 time units. Jobs have a value associated with them, which is uniformly distributed between 100 and 1000. After pre-processing, jobs must be handled, which takes 7 time units. There is one resource available to handle jobs. Given the lead times in this process, not all jobs can be handled, and prioritizing higher-value jobs will result in a higher total value. While the model does not include a constraint to handle high-value jobs first, it becomes clear from the log that such a constraint is enforced. For example, after pre-processing case 1 (i.e., at time 5), case 2 starts pre-processing, but case 1 is not immediately handled. The reason is that the high-value case 3 is already in the arrival place at this stage, and the handling transition is constrained until this case is pre-processed, so case 3 can be handled first\footnote{This process is modeled in a dialect of Colored Petri Nets (CPN) that uses Python for the specification of variables and expressions~\cite{Dijkman2024}.}. These constraints cannot be discovered by existing process mining algorithms. Therefore, the goal of this paper is to present a technique to discover them from event data.

\begin{minipage}{0.48\textwidth}
  \centering
  \includegraphics[width=\linewidth]{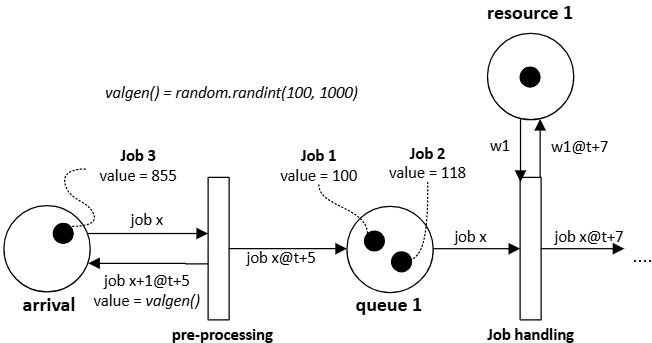}
  \captionof{figure}{Priority Process Model}\label{fig: Priority process model}
\end{minipage}
\hfill
\begin{minipage}{0.48\textwidth}
  \centering
  \scriptsize
\begin{tabular}{lllll}
\hline
\textbf{case} & \textbf{activity}        & \textbf{start} & \textbf{complete} & \textbf{value} \\
\hline
1 & pre-processing & 0  & 5  & 100 \\
2 & pre-processing & 5  & 10 & 118 \\
3 & pre-processing & 10 & 15 & 855 \\
4 & pre-processing & 15 & 20 & 801 \\
3 & handling      & 15 & 22 & 855 \\
5 & pre-processing & 20 & 25 & 146 \\
4 & handling      & 22 & 29 & 801 \\
6 & pre-processing & 25 & 30 & 222 \\
7 & pre-processing & 30 & 35 & 317 \\
\hline
\end{tabular}
\captionof{table}{Priority Process Event Log}\label{tab: Priority process event log}
\end{minipage}

To achieve this goal, decision synchronization patterns in supply chain processes are identified and modeled. A discovery technique is developed that can extract decision-synchronization patterns from event logs. We demonstrate that the technique works by applying it to event logs to which decision constraints are added, showing that these same constraints can be discovered again from the event log. Against this background, the contributions of this paper are: (1) Formalization of decision synchronization patterns using Petri nets. (2) Development of a discovery algorithm, enabling the extraction of these decision synchronization patterns from event logs.

The remainder of this paper is organized as follows. Sect. \ref{sec:background} introduces the theoretical background of this paper. In Sect. \ref{sec:dec_sync_pattern} we formally describe our approach and motivate the need for decision synchronization patterns. The decision synchronization patterns are described and formalized in Sect. \ref{sec:Decision Synchronization as Constraints}. We also describe the data required and the algorithm used to discover the patterns in this section. The pattern discovery algorithm is evaluated for single-pattern and multi-pattern process models in Sect. \ref{sec:evaluation}. The paper concludes with related work in Sect. \ref{sec:rel_work} and a summary of findings and outline of future work in Sect. \ref{sec:conclusion}.

\section{Background}\label{sec:background}

For simplicity and understanding we assume that a structurally correct process model is given as a simplified colored Petri net (CPN). The full CPN theory can be used as well. Let $\mathcal{C}$ be the set of all colorsets (i.e., datatypes), $\mathcal{P}$ the set of all places, and $\mathcal{T}$ the set of all transitions. The sets are mutually exclusive.


\begin{definition}[Simplified Colored Petri Net, Marking]\label{def:cpn}
A simplified colored Petri net is a tuple $(P, T, F, C)$, where:
$P \subseteq \mathcal{P}$ are the places of the net, $T \subseteq \mathcal{T}$ its transitions, $F \subseteq (P\times T) \cup(T\times P)$ its flows, and $C: P \rightarrow \mathcal{C}$ the function that connects each place to its colorset. The current state of the Petri net is represented as a marking $M$ mapping each place $p\in P$ to a multiset $C(p) \rightarrow \mathbb{N}$.
\end{definition}

For a transition $t\in T$, the set of incoming places, denoted $\bullet t$, is $\{p\in P|(p,t)\in F\}$. The set of outgoing places, $t\bullet$, is defined analogously. E.g., the net in Fig. \ref{fig: Priority process model} has places $\{\textrm{arrival}, \textrm{q1}, \textrm{r1}\}$, colorsets $\textrm{jobs} = \{\textrm{job1}, \textrm{job2}, ..., \textrm{jobN}\}$, $\textrm{workers} = \{\textrm{w1}\}$, and $\textrm{value} = [100,\ldots,1000]$. The marking in the figure is $\textrm{arrival}\mapsto(\textrm{job 3},855)^1$, $\textrm{q1}\mapsto(\textrm{job 1},100)^1(\textrm{job 2},118)^1$ and $\textrm{r1}\mapsto(w1)^1$.

\begin{definition}[Binding, Enabled Transition]\label{def:binding}
For a Petri net $(P, T, F, C)$ and a marking $M$, a binding of a transition $t\in T$ under marking $M$ is a function $Y$ that maps each incoming place $p\in \bullet t$ to a color from $C(p)$ that is currently in the marking, i.e., $\forall p \in \bullet t: Y(p) \dot\in M(p)$. The transition is enabled, denoted $M\stackrel{Y,t}{\rightarrow}$, if such a binding exists. An additional guard condition $G$ may be imposed, such that the transition is only enabled, denoted $M\stackrel{Y,t,G}{\rightarrow}$, if the condition, with its variables valuated according to the binding (i.e., $G\langle Y\rangle$), also holds.
\end{definition}

For example, consider the transition $\textrm{job\_handling}$ in Fig. \ref{fig: Priority process model}. Assume that the transition is only enabled when a job has a value higher than 500. Then the binding $\{\textrm{r1}\mapsto \textrm{w1}$, and $\textrm{q1}\mapsto (\textrm{job 1},855)\}$ enables the transition.

We aim to learn constraints from an event log. Let $\mathcal{E}$ be the set of possible events, $\mathcal{I}$ the set of identifiers, $\mathcal{R}$ the set of resources, $\mathcal{V}$ the set of data values, and let $\mathcal{D}$ be the set of datatypes, such that $\forall D\in \mathcal{D}: D\subseteq \mathcal{V}$.

\begin{definition}[Event, Log]\label{def:eventlog}
An event $e\in\mathcal{E}$ is something that happens: $e$, $id(e) \in I$ represents the case to which the event belongs, $l(e)$ its label, $ts(e)$ its start time, $tc(e)$ its completion time, $r(e) \in \mathcal{R}$ the resource that executed it, and $d(e)$ the data associated with the event. An event log $L \subseteq \mathcal{E}*$, enriched with event data $\{D_1, D_2, \ldots, D_n\} \subseteq \mathcal{D}$, is a set of sequences of events, also called traces, such that for each trace $\langle e_1, e_2, \ldots, e_n \rangle$, and $0 \leq i < j \leq n$, it holds that $id(e_i)=id(e_j)$ and $tc(e_i)\leq tc(e_j)$ and for each event $e$ it holds that $d(e) \in D_1 \times D_2 \times \ldots \times D_n$.
\end{definition}

We do not make any assumptions on how the Petri net is found. For example, any process mining algorithm can be used for those purposes. However, we do assume that the Petri net corresponds to the log in the following manner.


\begin{definition}[Case Place, Resource Place, Task Transition]\label{def:task_representation}
For a Petri net $(P,T,F,C)$, some places can be chosen to represent cases, some places to represent resources. We identify these places as $P^C \subseteq P$ and $P^R \subseteq P$, respectively. Similarly, some transitions may be chosen to represent tasks, denoted $T^T \in T$. For each resource place $p^r \in P^R$ the colorset must be a set of resources, i.e., $C(p^r) \subseteq \mathcal{R}$. Let $\{D_1, D_2, \ldots, D_n\} \subseteq \mathcal{D}$ be a collection of datatypes. For each case place $p^c \in P^C$, the colorset must be a set of tuples consisting of a case identifier and data values, i.e., $C(p^r) \subseteq \mathcal{I} \times D_1 \times D_2 \times \ldots \times D_n$. We also say that the Petri net is enriched with case data $\{D_1, D_2, \ldots, D_n\} \subseteq \mathcal{D}$.
\end{definition}

To mine decision synchronization constraints, we mine the constraint that synchronizes transitions in the Petri net with events in the log. To that end, we need to define `log move'.

\begin{definition}[Log Move]\label{def:log_move}
Let $e$ be a log event with data $\{D_1, D_2, \ldots, D_n\} \subseteq \mathcal{D}$. Let $(P,T,F,C)$ be a Petri net, with the same data, and $M$ be a marking of that Petri net. It must hold that $l(e) \in T^T$. A log move for the event $e$ from the current marking of the Petri net, denoted $M\stackrel{e}{\rightarrow}_\ell$, is the firing of transition $l(e)$, with a binding that maps the event's resource to the resource place and the event's case identifier and data to the case place, i.e., $Y = \{r \mapsto r(e), c \mapsto (id(e))\colon d(e)\}$, where $\{r\}=\bullet l(e) \cap P^R$, $\{c\}=\bullet l(e) \cap P^C$, and the `$\colon$' operator is the concatenation of tuples.
\end{definition}

Note that, strictly speaking, this definition assumes that each task transition has exactly one incoming and one outgoing case place. For brevity, we will often violate this constraint in this paper. However, models used in this paper can easily be transformed into models that meet the assumption by strictly separating task transitions from control-flow transitions.

\section{Decision Synchronization Pattern}\label{sec:dec_sync_pattern}

The goal of the approach is to find the constraint that governs the decision synchronization. Informally, a decision synchronization constraint can be observed in the following situations. (1) When a move is made in the log that can also be executed in the given model, the constraint must hold, such that the move can indeed be executed in the given model. (2) When a move is made in the log that cannot be executed in the given model, the constraint must not hold, because then the constraint must block the move that is possible in the model, such that another move (ideally the move that is also performed in the log) becomes possible. For example, in Fig. \ref{fig: Priority process model} at time 5, after starting pre-processing job 2, handling of job 1 does not start according to the log, even though it is possible. This is how we can observe there is a constraint in effect preventing this. More precisely, we assume that the constraint is what makes a difference between being able to perform a particular log move also as a model move. We formally define the constraint that enables this as follows.

\begin{definition}[Constraint-satisfied, replayable Petri net]\label{def:cspn}
Let $L$ be an event log and $(P,T,F,C)$ be a Petri net that is mined from $L$. The constraint-satisfied, replayable Petri net is the Petri net $(P,T,F,C)$, enriched with an additional constraint $G$, in which for all log moves $M\stackrel{e}{\rightarrow}_\ell$, there is a corresponding model move $M\stackrel{Y, t, G}{\rightarrow}$, such that $l(e)=t$, $Y(p^r)=r(e)$ for $\{p^r\}=\{\bullet l(e)\} \cap P^R$, and $Y(p^c)=(id(e), \ldots)$ for $\{p^c\}=\{\bullet l(e)\} \cap P^C$.
\end{definition}


In line with Def.~\ref{def:cspn}, the constraint that makes the Petri net replayable, meets the following criteria.

\begin{proposition}[Decision Synchronization Constraint]\label{prop:dsc}
The additional constraint $G$, which we also call the decision synchronization constraint, that turns the Petri net $N$ into a replayable Petri net, is the constraint that:
\begin{compactenum}
\item is satisfied for bindings $Y$, for which there exists a log move $M\stackrel{e}{\rightarrow}_\ell$, such that there is a model move $M\stackrel{Y, t}{\rightarrow}$ for which $l(e)=t$, $Y(p^r)=r(e)$ for $\{p^r\}=\{\bullet l(e)\} \cap P^R$, and $Y(p^c)=(id(e), \ldots)$ for $\{p^c\}=\{\bullet l(e)\} \cap P^C$; and
\item is not satisfied for bindings $Y$, for which there exists a log move $M\stackrel{e}{\rightarrow}_\ell$, but no model move $M\stackrel{Y, t}{\rightarrow}$ for which $l(e)=t$, $Y(p^r)=r(e)$ for $\{p^r\}=\{\bullet l(e)\} \cap P^R$, and $Y(p^c)=(id(e), \ldots)$ for $\{p^c\}=\{\bullet l(e)\} \cap P^C$.
\end{compactenum}
\end{proposition}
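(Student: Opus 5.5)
The plan is to obtain both clauses of Proposition~\ref{prop:dsc} by unfolding Definition~\ref{def:cspn}: take an arbitrary log move and see what the requirement of a matching guarded model move forces on $G$. Write $\mathcal{Y}(M,e)$ for the set of bindings $Y$ of $t=l(e)$ under $M$ with $Y(p^r)=r(e)$ and $Y(p^c)=(id(e),\ldots)$, where $\{p^r\}=\bullet l(e)\cap P^R$ and $\{p^c\}=\bullet l(e)\cap P^C$. Definition~\ref{def:binding} says $M\stackrel{Y,t,G}{\rightarrow}$ holds iff $M\stackrel{Y,t}{\rightarrow}$ holds and $G\langle Y\rangle$ is true. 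Definition~\ref{def:cspn} therefore reduces to a single condition: for every log move $M\stackrel{e}{\rightarrow}_\ell$, some $Y\in\mathcal{Y}(M,e)$ is enabled in the unguarded net and satisfies $G\langle Y\rangle$.

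\textbf{Clause (1).} Take a log move whose event-consistent binding is also an unguarded model move. By Definition~\ref{def:log_move}, the log move fires $l(e)$ with the binding $Y=\{r\mapsto r(e), c\mapsto (id(e))\colon d(e)\}$. This $Y$ lies in $\mathcal{Y}(M,e)$, and because the case data is fixed by the event, it is the canonical representative of that set. For the move to remain replayable after $G$ is added, Definition~\ref{def:cspn} needs a guarded move $M\stackrel{Y,t,G}{\rightarrow}$. This move can exist only if $G\langle Y\rangle$ holds. Hence $G$ must be satisfied on every such binding.

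\textbf{Clause (2).} The converse concerns model moves that are possible in $M$ but differ from what the log does. The intended reading is a binding $Y$ of a task transition that is enabled at $M$ while the event $e$ actually logged at $M$ does not correspond to $Y$: either $l(e)\neq t$, or $Y$ assigns a different case or resource. Here I would use that the paper replays the log as a sequence, so every marking reached is determined by the logged prefix. If $G\langle Y\rangle$ held, the guarded net could fire $Y$ at $M$ instead of $e$. The log would then not determine the execution, and the guarded model would not explain why the process waited, as at time 5 in Table~\ref{tab: Priority process event log}. The guard is exactly the difference between the unguarded and guarded enabled sets, so $G$ must be false on such $Y$.

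The main obstacle is this second clause. Definition~\ref{def:cspn} is only an existence condition and does not forbid extra enabled behaviour. A maximally permissive $G\equiv\mathit{true}$ would trivially give clause (1) and violate clause (2). To close the gap I would make explicit the assumption already stated informally before Definition~\ref{def:cspn}: the constraint is ``what makes a difference'' between log and model, so the net with $G$ must be \emph{deterministic with respect to the log}, meaning the only guarded-enabled task bindings at each replay marking are the logged ones. Under that strengthened reading, clause (2) follows directly from the argument above. I would present the proposition as a characterization of $G$ on the finite set of bindings visited during replay, which then serves as the labelled training data (positive for clause (1), negative for clause (2)) for the discovery algorithm.
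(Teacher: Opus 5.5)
For clause (1) you do what the paper does. Its entire proof is that the proposition ``directly follows'' from Definition~\ref{def:cspn}, and your unfolding (a guarded move $M\stackrel{Y,t,G}{\rightarrow}$ exists only if $G\langle Y\rangle$ holds) is all that stands behind that sentence.

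For clause (2) you part ways with the paper, and you are right to. Definition~\ref{def:cspn} is a purely existential fitness condition, so it can never force $G$ to be false. Concretely, take the untimed semantics of Definition~\ref{def:binding}. Every event in the fragment of Table~\ref{tab: Priority process event log} is then an enabled unguarded binding at its replay marking, so $G\equiv\mathit{true}$ already makes the net of Fig.~\ref{fig: Priority process model} replayable. Yet $G$ is then true on the binding that handles job~1 at the marking where the log starts pre-processing job~2, and clause (2) requires that binding to be blocked. So the paper's one-line proof really only covers (1). Clause (2) is an additional modelling requirement. Your strengthening (at every replay marking, $G$ holds on exactly the logged bindings among the enabled ones) is what actually licenses the \emph{False} labels used in Sect.~\ref{subsec: Enriched event-log}. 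The paper's route buys brevity; yours buys a statement that is provable and that matches the labelling procedure.

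Three things to tighten.

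\textbf{Uniqueness in clause (1).} Clause (1) quantifies over \emph{all} enabled bindings matching $r(e)$ and $id(e)$. Definition~\ref{def:cspn} supplies only one, and only with case token $(id(e),\ldots)$ rather than $(id(e))\colon d(e)$. Calling the log-move binding the ``canonical representative'' of $\mathcal{Y}(M,e)$ is really the assumption that $\mathcal{Y}(M,e)$ is a singleton: one token per case identifier in each case place, and no other unconstrained input places. State this assumption explicitly.

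\textbf{Non-enabled bindings in clause (2).} Read literally, clause (2) also covers bindings that are not enabled at all, and nothing constrains $G$ there. Restrict (2) to enabled bindings, as you implicitly do.

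\textbf{A way to need less of your new hypothesis.} Read the net with the usual urgent semantics of timed CPNs, in which model time advances only when nothing is enabled. Then requiring each logged event to be reproduced at its timestamp already forces $G$ to be false on any binding that is time-enabled at a replay marking while the next logged event lies strictly later. This recovers the waiting at time~5 from Definition~\ref{def:cspn} alone. Only bindings enabled at the same instant as the logged one then still need your determinism assumption.
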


\begin{proof}
The proof directly follows from Def.~\ref{def:cspn}.
\end{proof}

Proposition~\ref{prop:dsc} tells us precisely which data samples need to be collected from the event log to mine the decision synchronization constraint. Consequently, the (approximate) decision synchronization constraint $\hat{G}$ can be mined from a table where each row is a feature/value pair, where the features are taken from the marking (state) of the Petri net and the value is True if and only if the transition should fire. This is captured in Def. \ref{def:dscm}.


\begin{definition}[Decision Synchronization Pattern Mining]\label{def:dscm}
Let $\mathcal{F}$ be a function that extracts features from a binding $Y$. Let process construct $\mathcal{Z}\subseteq F$ specify what places to extract features from. The approximate decision synchronization constraint $\hat{G}$ is the constraint for which $\hat{G}(\mathcal{F}(Y))$ evaluates to True for all bindings $Y$ for which situation 1 of Prop.~\ref{prop:dsc} holds and that evaluates to False for all bindings $Y$ for which situation 2 of Prop.~\ref{prop:dsc} holds. The mined pattern is a combination of process construct $\mathcal{Z}$ and constraint $\hat{G}(\mathcal{F}(Y))$. 
\end{definition}

We allow some flexibility for the feature extraction function $\mathcal{F}$, in that we define it on a combination of a binding $Y$ and a marking $M$. Note that this is not in violation of Petri net syntax and semantics, as a Petri net can be redefined, such that each transition can access - but not change - information about tokens on all places. An example of a feature extraction function, $\mathcal{F}$, extracts: the value of the token in the arrival place, the maximum value of tokens in the queue place, the fraction of these two, and whether the maximum value of tokens in the queue place is enabled, this leads to Tab. \ref{tab:stateoutcome}.

\begin{table}[h!]
\centering
\begin{tabular}{l@{\hskip 0.25cm}l@{\hskip 0.25cm}l@{\hskip 0.25cm}l@{\hskip 0.25cm}l@{\hskip 0.25cm}l}
\hline
   Time &   Arrival Value &   Max Queue Value &   $\frac{Arrival}{Max(Queue)}$ &  \makecell[c]{$Max(Queue)$ \\ \text{ Enabled}} & Constraint   \\
\hline
      5 &             855 &               118 &              7.25 &False &False       \\
     15 &             801 &               855 &              0.17 &True &True        \\
     22 &             222 &               801 &              0.28 &True &True        \\
\hline
\end{tabular}
\captionof{table}{Observation-Outcome Log for Running Example}\label{tab:stateoutcome}
\end{table}

This table is built by replaying the event log from Tab.~\ref{tab: Priority process event log}, recording the states in which `job handling' is enabled according to the model, and creating Tab. \ref{tab:stateoutcome} by applying function $\mathcal{F}$ to the state, and adding `True' when the transition fired, `False' when it did not. Based on this table we can mine the decision synchronization constraint $\hat{G}$. It is easy to see that the constraint $\mathit{arrival\_value} \leq 1.5 \cdot max(\mathit{queue\_value})$ fits this table. Given sufficiently many observations an accurate constraint can be learned.

Clearly, deciding which features to extract from the state using the function $\mathcal{F}$ is not trivial. In the next section, we explain how we do this.
\section{Decision Synchronization Pattern Discovery}\label{sec:Decision Synchronization as Constraints}

Our approach to discovering decision synchronization patterns from an event-log and a Petri net is illustrated in Fig. \ref{fig: Overview decision mechanism mining}. It assumes that an event-log and a process model - in the form of a Petri net - are given. The process model can be created in any way, e.g., by modeling it directly or by discovering it using an existing discovery algorithm, such as the $\alpha$++ miner \cite{Wen2007}. 

The approach discovers decision synchronization constraints $G$ (Prop. \ref{prop:dsc}) on Petri net transitions that constrain which case should happen under which circumstances. The constraints can be defined over many different sets of features $\mathcal{F}$ (Def. \ref{def:dscm}) that are taken directly from case properties, but also from cross-case properties, like the number of waiting cases, the difference in value between cases, or the difference in time duration between cases. 

It is infeasible to create all possible intra- and inter-case features to pass to the constraint mining algorithm. Therefore, we present common decision synchronization patterns $\mathcal{P}$ and, for each pattern $p \in \mathcal{P}$, the set of features $\mathcal{F}_p$ that is relevant for it. Based on the event-log, for a pattern $p \in \mathcal{P}$ and a transition $t \in T$, we can construct a dataset on which to mine a constraint $\hat{G_{pt}}$. For each event $e$ in the original event-log, this dataset contains the feature values $f(e), f \in \mathcal{F}_p$ and whether the constraint holds for that event (Prop. \ref{prop:dsc}). We call this dataset a pattern-transition log. Each $\hat{G_{pt}}$ is mined as a decision tree, from which we extract the decision constraint. Finally, we add the decision constraint to the process model.

\begin{figure}[h]
\centering
\includegraphics[width=\textwidth]{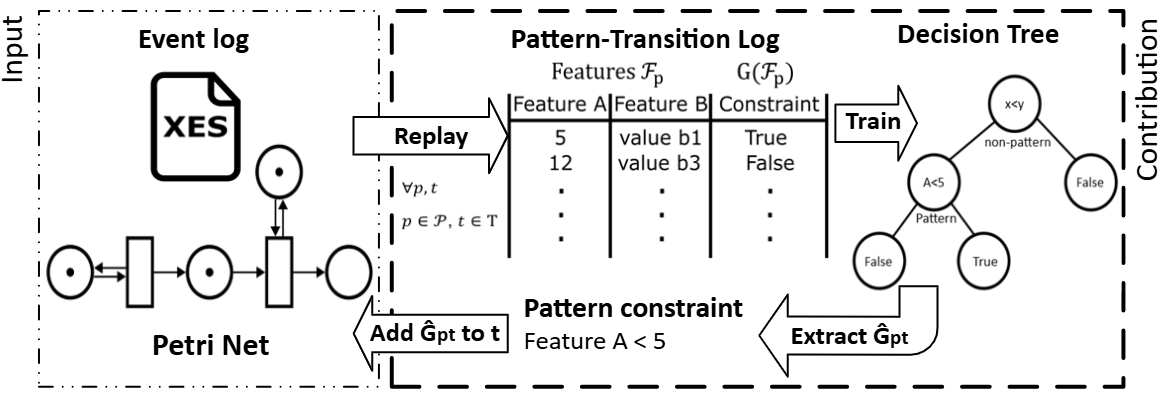}
\caption{Overview Decision Synchronization Pattern Discovery Approach}
\label{fig: Overview decision mechanism mining}
\end{figure}

Accordingly, the remainder of this section is structured as follows. Sect. \ref{subsec:  Pattern Description and Formalization} presents the decision synchronization patterns $\mathcal{P}$ and, for each pattern $p \in \mathcal{P}$, the corresponding features $\mathcal{F}_p$. Sect. \ref{subsec: Enriched event-log} explains how the corresponding pattern-transition log can be derived from an event-log and a process model. Finally, Sect. \ref{subsec: Pattern Discovery} explains how to extract the decision synchronization constraints from the decision tree trained on the pattern-transition log.

\subsection{Pattern Descriptions and Formalization}\label{subsec: Pattern Description and Formalization}

We describe and formalize four decision synchronization patterns inspired by supply chain processes, \textsl{priority, blocking, hold-batch, and choice} in Tab. \ref{tab: Description of decision synchronization patterns}. For each of these patterns, an objective for the process outcome is described as a motivation. The patterns are formalized by describing the features $\mathcal{F}_p$ they use, the related constraints $G$, and the process construct $\mathcal{Z}_p$ in which a constraint can be observed. 

A patterns features $\mathcal{F}_p$ can be described by three categories. First, \textit{AttrVal} represents the colorset $C: P \rightarrow \mathcal{C}$ (Def. \ref{def:binding}) given to tokens in certain places. These could be values, cycle times, error rates, or any other relevant attribute. To evaluate if the token an attribute is linked to is enabled, we use the feature \textit{AttrEnabled}. \textit{NrTokens} represents the number of tokens in a place. \textit{NrTokensEnabled} refers to the subset of tokens that are time-enabled. \textit{TimeUntilNext} represents how long it takes for the next token to be available. 

The features $\mathcal{F}_p$ are used to formalize the patterns as constraints and to represent the state of the process using the pattern-transition log. The constraints specify the relation the features should have with each other and/or some thresholds $a$ and $b$ to be able to fire a guarded transition from the current marking $M\stackrel{Y,t,G}{\rightarrow}$ (Def. \ref{def:binding}). We refer to the guarded transition as $t_g$. 

The process construct $\mathcal{Z}_p$ (Def. \ref{def:dscm}) describes how the flows $F \subseteq (P\times T) \cup(T\times P)$ (Def. \ref{def:cpn}) between places and transitions of interest should be connected to constitute a pattern. The description and formalization are provided in Tab. \ref{tab: Description of decision synchronization patterns}. Each pattern is illustrated by an example in Fig. \ref{fig: decision patterns}. 

\begin{table}
\centering
\renewcommand\cellalign{lc}
\renewcommand\cellgape{\Gape[4pt]}
\scriptsize
\begin{tabular}{|>{\centering\arraybackslash}m{1.5cm}|
                 >{\centering\arraybackslash}m{2.9cm}|
                 >{\centering\arraybackslash}m{2cm}|
                 >{\centering\arraybackslash}m{2.9cm}|
                 >{\centering\arraybackslash}m{2.5cm}|}
\hline
\textbf{Pattern} & \textbf{Priority} & \textbf{Blocking} & \textbf{Hold-Batch} & \textbf{Choice} \\ \hline

\makecell[c]{\textbf{Objective}} 
& Prioritization of jobs based on attributes like a high value, faster processing, or a low failure rate 
& Downstream locations in a process are full or incur high unit holding cost 
& Reduce unit processing costs of a batch by waiting short time period for more cases to arrive 
& Avoid depletion of a common resource when two activities require this same resource 
\\ \hline


\textbf{Features $\mathcal{F}_p$} 
& AttrVal, AttrEnabled
& NrTokens 
& NrTokensEnabled, TimeUntilNext
& TimeUntilNext \\ \hline

\makecell[c]{\textbf{Constraint} \\
             $\forall\, i, j \in P,$\\
             $\ i \ne j$} 
& \makecell[c]{
    $k_x(\text{AttrVal}_i)\leq a \times$ \\
    $k_y(\text{AttrVal}_j) \land$\\ \\
    $k_x(\text{AttrEnabled}_i)$ \\
    $==True$\\ \\
    $k_x, k_y \in $\\
    $\{argmin, argmax\}$}
& \makecell[c]{
    $\text{NrTokens}_i\leq a$}
& \makecell[c]{
    $\text{NrTokensEnabled}_i \geq a$\\ 
    $\land$\\ 
    $\text{TimeUntilNext}_i \geq b$}
& \makecell[c]{
    $\text{TimeUntilNext}_i \geq b$} \\ \hline

\makecell[c]{\textbf{Process}\\ \textbf{construct} \\ $\mathcal{Z}_p$ \\
             $\exists t_g,t' \in T,\,$ \\ $\exists p_i, p_j \in P$} 
& \makecell[c]{
    $(p_i, t_g) \in F \land$\\
    $(p_j, t') \in F \land $\\ 
    $(t', p_i) \in F \land$\\
    $p_i \neq p_j, t_g \neq t' $}
& \makecell[c]{
    $(t_g, p_i) \in F$}
& \makecell[c]{
    $(p_i, t_g) \in F$}
& \makecell[c]{
    $(p_i, t') \in F \land$ \\ 
    $(p_j, t') \in F \land$\\
    $(p_j, t_g) \in F \land$ \\
     $p_i \neq p_j, t_g \neq t'$
    } \\ \hline
\end{tabular}
\caption{Description and Formalization of Decision Synchronization Patterns}
\label{tab: Description of decision synchronization patterns}
\end{table}

\begin{figure}
  \centering
  \begin{subfigure}{0.48\textwidth}
    \centering
    \includegraphics[width=\linewidth]{Figures/Priority.png}
    \caption{Priority}\label{subfig: priority}
  \end{subfigure}
  \begin{subfigure}{0.48\textwidth}
    \centering
    \includegraphics[width=\linewidth]{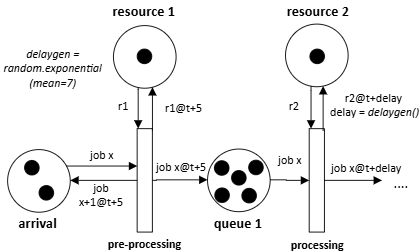}
    \caption{Blocking}\label{subfig: blocking}
  \end{subfigure}

  \begin{subfigure}{0.48\textwidth}
    \centering
    \includegraphics[width=\linewidth]{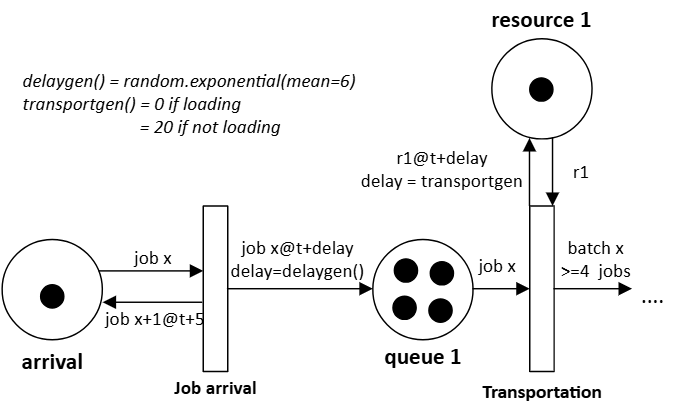}
    \caption{Hold-batch}\label{subfig: hold-batch}
  \end{subfigure}
  \begin{subfigure}{0.48\textwidth}
    \centering
    \includegraphics[width=\linewidth]{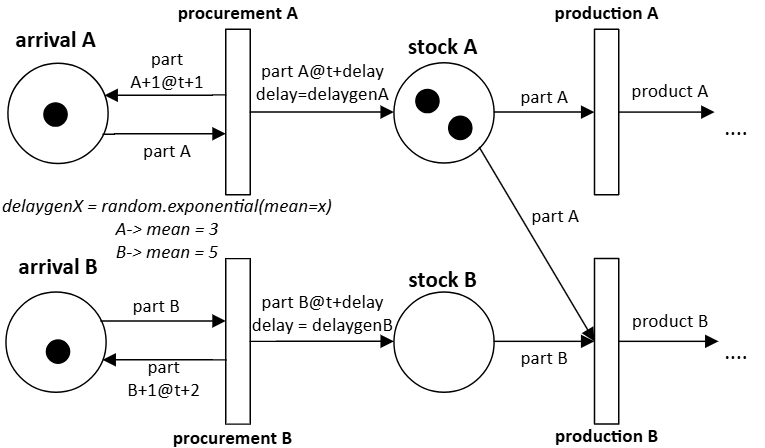}
    \caption{Choice}\label{subfig: choice}
  \end{subfigure}
  \caption{Process Models per Decision Synchronization Pattern}\label{fig: decision patterns}
\end{figure}

\mypar{Priority} In Fig. \ref{subfig: priority}, if a job arrives with a value more than 1.5 times higher than the most valuable job in queue 1, one needs to wait for this job to move to queue 1 and be processed in the job handling transition. In this case, the $\mathcal{F}_p$ corresponds to \textsl{value} and \textsl{max\_value\_enabled}, and the constraint on transition $t_g =$ pre-processing can be formalized as $max(arrival\_value) \leq 1.5 \times max(queue1\_value) \land argmax(queue1\_value) == True$. The process construct is in line with the definition for priority, since queue 1 is an input of guarded transition $t_g$, and arrival is an input to transition $t'$ such that it has an outflow to queue 1.


\mypar{Blocking} In Fig. \ref{subfig: blocking}, when queue 1 reached its capacity of five tokens, pre-processing is blocked. $\mathcal{F}_p$ is \textsl{number of tokens}, which can be formalized as a constraint on transition $t_g =$ pre-processing of $queue\_1\_nr\_tokens \leq 5$. The process construct specifies that $t_g$ should be an outflow of the place, which is the case for queue 1.


\mypar{Hold-batch} In Fig. \ref{subfig: hold-batch}, jobs are usually transported from a batch size of 4 but if a fifth job is ready in less than 2 time units, we wait for the fifth one to be included in the current batch. $\mathcal{F}_p$ for this example is \textsl{number of tokens enabled} and \textsl{time until next}. This can be formalized as a constraint on transition $t_g=$ transportation of $queue\_1\_nr\_tokens\_enabled \geq 4 \land  queue\_1\_time\_until\_next < 2$. Place $queue\_1$ is an inflow to guarded transition $t_g$, so the process construct is in line with the formalization.


\mypar{Choice} In Fig. \ref{subfig: choice}, when product B cannot be produced because there is no stock B, but it takes less than three time units for tokens to become available in stock B, hold cases of stock A for production A to allow the choice between product A and product B. $\mathcal{F}_p$ is \textsl{time until next}. This can be formalized with guarded transition $t_g$ = production A with constraint $stock\_B\_time\_until\_next \geq 3$. The process construct follows the formalization, since production A shares an input place with production B, and the place under consideration in the constraint is an inflow for production B that has no inflows to production A.

\subsection{Pattern-transition Log}\label{subsec: Enriched event-log}

To find the decision synchronization constraints, we will train a decision tree using a pattern-transition log. The first step is to generate this log from an event-log $L$ and a Petri net $P$. Note that the behavior in $L$ contains the constraints $G$ that we want to discover, and constraints $G$ are not reflected in $P$ (i.e. $P$ is not constraint-satisfied according to Def. \ref{def:cspn}). To discover if a constraint applies on some transition $t \in T$ according to some pattern $p \in \mathcal{P}$, we will create a pattern-transition log for $p$ and $t$. This will be done by replaying $L$ over $P$ and collecting the features $\mathcal{F}_p$. To know what pattern-transition logs should be generated, the incoming and outgoing flows $F$ to transition $t$ should be in line with the process construct for a specific pattern $p$, as outlined in Tab. \ref{tab: Description of decision synchronization patterns}.

Proposition~\ref{prop:dsc} and Def.~\ref{def:dscm} contain the primary rules for extracting a pattern-transition log from an event-log for pattern $p$ and transition $t$. Starting from an event-log $L$ and a Petri net $P$ in initial marking $M$, we iterate over all events $e\in L$. For each event $e$, we find the enabled binding $M\stackrel{Y,t}{\rightarrow}$, $l(e)=t$, with the highest similarity $\mathit{sim}(Y, e)$ to the event. This binding is added to the pattern-transition log with features $\mathcal{F}_p(Y,M)$ and the constraint value $\hat{G}$ set to True. The other bindings are added with features $\mathcal{F}_p(Y,M)$ and the constraint value $\hat{G}$ set to False. The similarity function can be defined in different ways. For example, in Prop.~\ref{prop:dsc} we require an exact match between the values established in the binding and the values associated with the event. In our evaluation, we require an exact match for the case id variable and a time as close as possible to the event time, after the event time, and the other variables as close as possible to the event variables. An example of a fragment of the pattern-transition log for the ``handling'' transition in the priority process was given in Tab. \ref{tab:stateoutcome}. This type of log is used in the next step to train a decision tree and discover approximate constraint $\hat{G}$ (Def. \ref{def:dscm}). 

\subsection{Pattern Constraint Discovery}\label{subsec: Pattern Discovery}
Based on the pattern-transition log a decision tree is trained to classify whether the constraint is aligned (Prop. \ref{prop:dsc}) for the transition in the pattern-transition log or not. Fig. \ref{fig:Decision tree priority} shows an example of the decision tree trained for the handling transition in the priority process. The steps for discovering the decision synchronization patterns from such a tree are outlined beside Fig. \ref{fig:Decision tree priority}. The splitting criteria in the path from the root node to the leaf nodes where the constraint is not aligned contain the constraints of decision-synchronization patterns. The path from the leaf nodes to the root node is traced when the constraint is classified as False, the number of samples is greater than threshold $\tau_s$, and the Gini impurity value is lower than threshold $\tau_g$. Thresholds $\tau_s$ and $\tau_g$ ensure that only high-confidence predictions are used to discover pattern constraints. The splitting criteria in the nodes on the path are evaluated against constraints in the pattern formalization. This step ensures that non-pattern splitting criteria, which could represent either noise or other process-constraining effects, are not considered as a pattern constraint. Only node splits that cause a decrease in the Gini value $\Delta Gini$ between the current and the next node are considered. If a node is present in two paths, only the splitting criteria with the highest absolute $\Delta Gini$ is retained. In case of multiple constraints representing a single pattern (priority and hold-batch), the pattern is only returned if both have been discovered from the splitting criteria. The returned pattern constraint is added to the original Petri net to make it a constraint-satisfied, replayable Petri net.

\begin{figure}[ht]
\begin{minipage}[t]{0.48\textwidth}
\scriptsize
\vspace{-35 mm}
\textbf{Given: Decision Tree $T$; thresholds $\tau_s,\tau_g$; pattern constraints $\mathcal{P}$} \\
\begin{enumerate}
    \item Select leaf nodes with Class=False, \\
          $\text{samples} \ge \tau_s$, $\text{Gini} \le \tau_g$.
    \item For each leaf:
    \begin{enumerate}
        \item Trace path to root.
        \item Keep nodes where comparison operator matches $\mathcal{P}$.
        \item Retain nodes with $\Delta Gini$<0.
        \item Retain duplicates with $\max(|\Delta Gini|)$.
        \item Verify constraints in $\mathcal{P}$ are covered.
    \end{enumerate}
\end{enumerate}
\textbf{Return: Pattern = Remaining splitting criteria}

\end{minipage}
\hfill
\begin{minipage}[t]{0.48\textwidth}
\centering
\includegraphics[width=\linewidth]{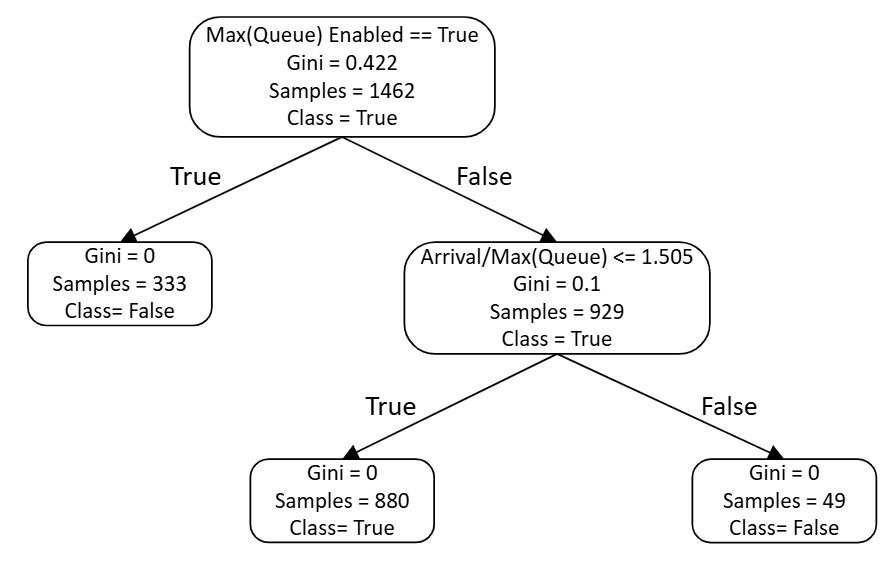}
\caption{Decision Tree Priority Pattern}
\label{fig:Decision tree priority}
\end{minipage}
\end{figure}

To illustrate the procedure, lets follow the steps in the decision tree from Fig. \ref{fig:Decision tree priority}. The procedure was followed using a $\tau_s$ of 10 and a $\tau_g$ of 0.1. The leaf nodes with Class=False are found on the True branch after the first split, and the False branch after the second split. The minimum number of samples are, respectively, 333 and 49, which are greater than $\tau_s$. Both Gini values are 0, being lower than $\tau_g$. Next, we trace the path from these leaf nodes to the root node and collect the splitting criteria. We find the following splits: 1) $Max(Queue) Enabled == True$, 2) $Max(Queue) Enabled == False$, 3) $Arrival/Max(Queue) > 1.505$. In this tree, all node comparison operators match $\mathcal{P}$, and all nodes have a negative $\Delta Gini$. We do have duplicate constraints 1) and 2), so we retain the constraint with the highest $|\Delta Gini|$. For 1) $|\Delta Gini| = 0.422$, and 2) $|\Delta Gini| = 0.322$, so we retain only 1). For the priority pattern, two parts of the constraint must be there to constitute a decision synchronization pattern. Since both $Max(Queue Enabled) == True$ (token enabled condition) and $Arrival/Max(Queue) > 1.505$ (attribute condition), are there, the decision synchronization pattern is returned.

\section{Evaluation}\label{sec:evaluation}
The performance of the pattern discovery procedure described in Sect. \ref{sec:Decision Synchronization as Constraints} is evaluated along two lines. First, using the single-pattern processes in Fig. \ref{fig: decision patterns} and comparing the constraints returned against the constraints implemented in the process models. Second, we use a more complex fictional supply chain process in which multiple patterns are imposed. The implementation and evaluation results, as well as both datasets including all models, parameterizations, and constraints, are provided as supplementary material\footnote{\url{https://github.com/TijmenKuijpers/Decision_Sync_Patterns.git}\label{footnote: git page}}.

\subsection{Single-Pattern Evaluation}\label{subsec: Single Pattern Evaluation}
We use the process models as given in Fig. \ref{fig: decision patterns} to evaluate the approach for a single-pattern setting, i.e., a setting in which per process only one decision synchronization pattern is present. In Tab. \ref{tab: Single-pattern modeled and discovered constraints}, the constraints that are imposed on the process model, and the constraints that are discovered using the pattern discovery algorithm are shown for each pattern. All constraints discovered closely match or are identical to the original constraint. No additional constraints that are not present in the original model are discovered. The constraints that use the \textit{NrTokens} feature base have threshold values that differ by 0.5 from the modeled constraints. These values are equivalent to the values in the modeled constraints, because the \textit{NrTokens} is a discrete attribute and the set of accepted values is the same. This evaluation shows that when a decision synchronization pattern is present for a single transition in a small process model, the pattern discovery procedure can correctly identify the constraints that belong to the patterns. In Sect. \ref{subsec:multi-pattern-eval} we test the robustness of the approach with larger process models containing multiple decision synchronization patterns at once. 

\begin{table}[ht]
\centering
\scriptsize
\begin{tabular}{|l|l|l|l|}

\hline
\makecell[c]{\textbf{Pattern}} 
& \makecell[c]{\textbf{Event}} 
& \makecell[c]{\textbf{Constraint Model}} 
&  \makecell[c]{\textbf{Constraint Discovered}} \\ \hline
Priority         
& Job handling   
& \makecell[c]{$AttrVal_{arrival} \leq 1.5 \times max(AttrVal_{q1})$\\
                $\land$\\
                $argmax(ValEnabled_{q1}) = True$}                    
& \makecell[c]{$\frac{AttrVal_{arrival}}{\max(Val_{q1})}\leq 1.497$\\
                $\land$ \\
                $argmax(ValEnabled_{q1}) = True$}                 

\\ \hline
Blocking         
& Pre-processing 
&  \makecell[c]{$NrTokens_{q1} < 5$}                   
& \makecell[c]{$NrTokens_{q1} \leq 4.5$}                 
                                                           
\\ \hline
\makecell[c]{Hold-\\batch}      
& Transportation 
&  \makecell[c]{$NrTokensEnabled_{q1} > 3$\\
                $\land$ \\
                $TimeUntilNext_{q1} > 2 $}                   
& \makecell[c]{$NrTokensEnabled_{q1} > 3.5$ \\
                $\land $\\
                $TimeUntilNext_{q1} > 2$}                  
                                                           
\\ \hline
Choice           
& Production A   
& \makecell[c]{$TimeUntilNext_{arrivalB} > 2$}                   
& \makecell[c]{$TimeUntilNext_{arrivalB} > 1.915$}               
                                                        
\\ \hline
\end{tabular}
\caption{Single-Pattern Evaluation: Modeled and Discovered Constraints}\label{tab: Single-pattern modeled and discovered constraints}
\end{table}

\newpage
\subsection{Multi-Pattern Evaluation - a Fictional Supply Chain}\label{subsec:multi-pattern-eval}

\mypar{Setup}
The multi-pattern evaluation follows a fictional supply chain process as illustrated in Fig. \ref{fig: Fictional Supply Chain Process Model}. The process involves the ordering, production, and transportation of phones and game computers from phone cases (pc), chips (c), and game cases (gc). The final products are distributed to two phone warehouses and one game warehouse serving customers in the Netherlands (NL) and Germany (DE). Each pattern is imposed on the process exactly once. The priority pattern is imposed on phone production to allow any tokens with a higher value in ``ordered phone cases'' to be produced first. Priority values are either 1 for high priority, or 0 for low priority. Ordering of game cases is blocked when there are already three tokens in ``stock game cases''. The transportation occurs in batches of five, and holds for additional tokens when they are enabled for ``phone NL'' in less than one time unit. Finally, the choice pattern is imposed on game production so that the transition is not fired when a phone case in ``stock phone cases'' is enabled in less than half a time unit. Some delays in the process are stochastic to allow for various process executions. To account for this stochasticity in the evaluation, the experiment is reproduced 10 times. The detailed parameterization and all experimental results are added to the supplementary materials. 

\begin{figure}
\centering
\includegraphics[width=\textwidth]{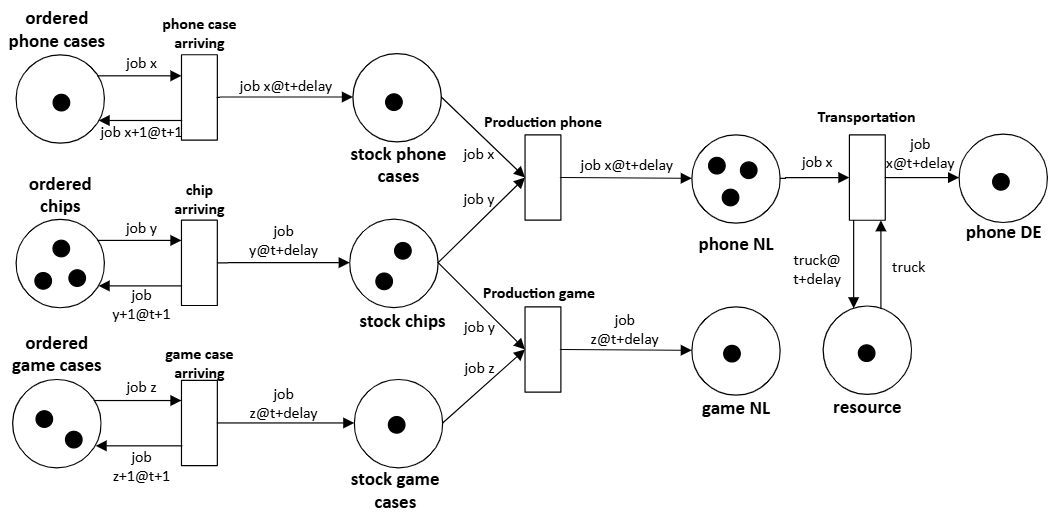}
\caption{Fictional Supply Chain Process Model}
\label{fig: Fictional Supply Chain Process Model}
\end{figure}

\mypar{Results}
For one of these experiments, the modeled and discovered constraints are stated in \autoref{tab: Multi-pattern evaluation: Modeled and Discovered Constraints}. All constraints are discovered in all experiments. No additional constraints are discovered that are not in the original model. The accuracy for the threshold value of the constraints varies depending on the number of samples there are for the pattern execution. Due to the parameterization of the process the number of samples, mainly for the priority and hold-batch pattern, was sometimes low. This results in less accurate results in some experiments for these patterns. Note that the discovered priority constraint is $\frac{max(AttrVal_{ordered\_pc})}{\max(AttrVal_{stock\_pc})}\leq 50.5$. This high value results from substituting 0 for 0.01 to avoid division by zero, so the possible values of this fraction are 0, 1, or 100. Thus, the constraint can be read as $\frac{max(AttrVal_{ordered\_pc})}{\max(AttrVal_{stock\_pc})}= 0$ or $1$, which is consistent with the modeled constraint.

\begin{table}[ht]
\centering
\scriptsize
\begin{tabular}{|l|l|l|l|}

\hline
\makecell[c]{\textbf{Pattern}} 
& \makecell[c]{\textbf{Event}} 
& \makecell[c]{\textbf{Constraint Model} }
&  \makecell[c]{\textbf{Constraint Discovered}} \\ \hline
Priority         
& \makecell[c]{Production \\ phone}   
& \makecell[c]{$AttrVal_{ordered\_pc} \leq $\\
                $max(AttrVal_{stock\_pc})$\\
                $\land$\\
                $argmax(ValEnabled_{stock\_pc})$ \\ 
                $= True$}                    
& \makecell[c]{$\frac{max(AttrVal_{ordered\_pc})}{\max(AttrVal_{stock\_pc})}\leq 50.5$\\
                $\land$ \\
                $argmax(ValEnabled_{stock\_pc}) $\\
                $= True$}                        
                                                            
\\ \hline
Blocking         
& \makecell[c]{Game case \\ arriving} 
&  \makecell[c]{$NrTokens_{stock\_gc} < 3$}                   
& \makecell[c]{$NrTokens_{stock\_gc} \leq 2.5$}                 
                                                           
\\ \hline
\makecell[c]{Hold-\\batch}      
& \makecell[c]{Transpor- \\ tation} 
&  \makecell[c]{$NrTokensEnabled_{phone\_NL} > 2$\\
                $\land$ \\
                $TimeUntilNext_{phone\_NL} > 1 $}                   
& \makecell[c]{$NrTokensEnabled_{phone\_NL} > 2.5$ \\
                $\land $\\
                $TimeUntilNext_{phone\_NL} > 1.08$}       
                                                           
\\ \hline
\makecell[c]{Choice}        
& \makecell[c]{Production\\ game}   
& \makecell[c]{$TimeUntilNext_{stock\_pc} > 0.5$}                   
& \makecell[c]{$TimeUntilNext_{stock\_pc} > 0.5$}

\\ \hline
\end{tabular}
\caption{Multi-pattern evaluation: Modeled and Discovered Constraints}\label{tab: Multi-pattern evaluation: Modeled and Discovered Constraints}
\end{table}

\mypar{Discussion}
The multi-pattern evaluation shows that the decision synchronization pattern discovery procedure works for complex processes, i.e. with multiple places, transitions, and patterns, even with stochasticity added in the transition delays. We note that the accuracy of the results largely depends on the number of samples per pattern execution in the original log. There is a tradeoff between the number of accepted leaf samples ($\tau_s$) in the decision tree and the accuracy and completeness of the returned patterns.  Further generalization of the procedure can be evaluated in various experiments. Possible experiments include overlapping patterns (same transition or feature space), stochastic pattern execution, guards not related to decision synchronization patterns. A mathematical proof and implementation of methods in a case study are part of future work to show the methods generalizability.

\section{Related Work}\label{sec:rel_work}
In this work, we extract constraints mirroring the reasons for decision synchronization from event logs using decision trees. Decision mining approaches like \cite{DBLP:conf/bis/BazhenovaBW16,DBLP:conf/caise/MannhardtLRA16,DBLP:conf/bpm/RozinatA06} are similar in terms of technique and aim, yet our work differs since we discover decision rules that can span across multiple process instances. In \cite{Winter2020a} constraints spanning multiple instances are defined and in \cite{Winter2020b} discovered from event logs. The main difference of \cite{Winter2020a,Winter2020b} to our work is that we consider different types of synchronization mechanisms and derive explicit decision rules underlying those. 

Approaches that account for capturing collaboration aspects as we are considering include, i.a., collaboration mining as presented in \cite{Benzin2025}, which focuses on message exchange and resource sharing using Petri nets. A different variation on collaboration mining, collaborative business process discovery in BPMN 2.0 \cite{Pea2024}, focuses on message exchange and choreography. Another related research direction is object-centric process mining \cite{vanderAalst2019,DBLP:journals/fuin/AalstB20}, which was demonstrated to be suitable to uncover overstocking patterns in a supply chain case study \cite{Kretzschmann2024}. Compared to the latter, our focus and consequently the types of patterns considered goes beyond only considering overstock.

Another line of research related to our work is declarative process mining, e.g., \cite{DBLP:conf/cidm/BernardiCM14,DBLP:conf/icpm/ChristfortRFHS24,DBLP:conf/cidm/CiccioM13,DBLP:conf/cidm/MaggiMA11}, which discover the rules and constraints that determine execution of processes in general. Our work differs in that it discovers additional constraints in imperative models, also considering inter-case constraints. In doing so, the techniques that are used are fundamentally different.

\section{Conclusion}\label{sec:conclusion}

Decision synchronization is crucial for achieving desired process outcomes, yet there are only a few approaches that can discover synchronization patterns from event logs, in particular those relying on properties of multiple cases. In this paper, we contribute to addressing this gap by presenting an approach, based on Petri nets and decision trees, capable of discovering different decision synchronization patterns inspired by supply chain processes. A pattern consists of certain process constructs in combination with constraints determining which case to execute when.
The evaluation of our approach shows that we can discover explicit constraints that describe these decision synchronization patterns in small to medium-sized processes. We evaluated four single-pattern and a multi-pattern process to show generalizability when multiple patterns are expressed in a single process. To further test the generalizability and robustness of this method, additional experiments are subject to future work. For further generalizability, experiments include trying different patterns, adding noise to the logs in the form of additional non-pattern constraints, stochastic conformance to patterns, and examining possible interference when multiple patterns overlap (i.e., apply to the same places, features, or transitions). Finally, the methods should be applied to a real-life event log to test the robustness.

%
%
%
\bibliographystyle{splncs04}
\bibliography{bibliography}

\end{document}